\renewcommand{\u}{\ensuremath{\mathbf{u}}}
\renewcommand{\v}{\ensuremath{\mathbf{v}}}
\renewcommand{\a}{\ensuremath{\mathbf{a}}}
\renewcommand{\b}{\ensuremath{\mathbf{b}}}
\renewcommand{\c}{\ensuremath{\mathbf{c}}}
\renewcommand{\d}{\ensuremath{\mathbf{d}}}
\newcommand{\e}{\ensuremath{\mathbf{e}}}
\newcommand{\n}{\ensuremath{\mathbf{n}}}
\newcommand{\m}{\ensuremath{\mathbf{m}}}
\newcommand{\X}{\ensuremath{\mathbf{X}}}
\newcommand{\x}{\ensuremath{\mathbf{x}}}
\newcommand{\y}{\ensuremath{\mathbf{y}}}
\newcommand{\Xtrain}{\ensuremath{\mathbf{X}_\text{train}}}
\newcommand{\ntrees}{\ensuremath{n_\text{trees}}}
\newcommand{\T}{\ensuremath{\mathcal{T}}}
\newcommand{\Trees}{\ensuremath{\{ \T \}_{i=1}^{\ntrees}}}
\newcommand{\Rf}{\ensuremath{\mathcal{R}}}
\renewcommand{\L}{\ensuremath{\mathbb{L}}}
\newcommand{\R}{\ensuremath{\mathbb{R}}}
\newcommand{\B}{\ensuremath{\mathbb{B}}}
\newcommand{\E}{\ensuremath{\mathbb{E}}}
\renewcommand{\P}{\ensuremath{\mathbb{P}}}
\newcommand{\mdot}[1]{\left\langle #1 \right\rangle_\L}
\newcommand{\edot}[1]{\left\langle #1 \right\rangle}
\newcommand{\Ind}[1]{\ensuremath{\mathbf{1}_{#1}}}
\newcommand{\Sign}{\ensuremath{\operatorname{Sign}}}
\newcommand{\hyperdt}{\textsc{HyperDT}}
\newcommand{\hyperrf}{\textsc{HyperRF}}
\newcommand{\sklearn}{\textsc{Scikit-Learn}}
\newcommand{\fhdt}{Fast-\textsc{HyperDT}}
\newtheorem{theorem}{Theorem}[section]
\newtheorem{lemma}[theorem]{Lemma}
\theoremstyle{definition}
\theoremstyle{remark}
\newtheorem{remark}[theorem]{Remark}
\title{
    Even Faster Hyperbolic Random Forests: \\
    A Beltrami-Klein Wrapper Approach
}
\author{\name Philippe Chlenski \email pac@cs.columbia.edu
    \\
    \addr Department of Computer Science\\
    Columbia University
    \AND
    \name Itsik Pe'er \email itsik@cs.columbia.edu\\
    \addr Department of Computer Science\\
    Columbia University
}
\begin{document}

\maketitle

\begin{abstract}
    Decision trees and models that use them as primitives are workhorses of machine learning in Euclidean spaces.
    Recent work has further extended these models to the Lorentz model of hyperbolic space by replacing axis-parallel hyperplanes with homogeneous hyperplanes when partitioning the input space.
    In this paper, we show how the \hyperdt\ algorithm can be elegantly reexpressed in the Beltrami-Klein model of hyperbolic spaces.
    This preserves the thresholding operation used in Euclidean decision trees, enabling us to further rewrite \hyperdt as simple pre-- and post-processing steps that form a wrapper around existing tree-based models designed for Euclidean spaces.
    The wrapper approach unlocks many optimizations already available in Euclidean space models, improving flexibility, speed, and accuracy while offering a simpler, more maintainable, and extensible codebase. 
    Our implementation is available at \url{https://github.com/pchlenski/hyperdt}.
\end{abstract}

\section{Introduction}
    All machine learning classifiers implicitly encode geometric assumptions about their feature space.
    Among these, standard decision trees and classifiers built on them stand out as being remarkably indifferent to geometry: Decision tree learning is ultimately a discrete optimization over label statistics, one feature at a time. The metric structure of the multi-feature space only matters when evaluating its final partition.

    In contrast, all
    linear and neural models relying on matrix multiplication implicitly work with signed distances to a decision-boundary hyperplane; probabilistic methods based on Gaussian densities such as Naive Bayes have probability densities that depend on distance to the centroid; not to mention explicitly distance-based methods such as support vector machines
    (SVMs) and $k$-nearest neighbors.
    Yet decision trees rely only on label statistics for candidate partitions which depend only on the ordering of features along each dimension: a property we exploit to reformalize their behavior in hyperbolic space. 

    There are two existing approaches in the literature to extending decision trees to hyperbolic spaces.
    \citet{doorenbos_hyperbolic_2023} proposes \textsc{HoroRF}, an ensemble of horospherical support vector machines \citep{fan_horospherical_2023}, equating between horospheres in hyperbolic and hyperplanes in Euclidean space, as well as between SVMs and linear splits at a decision tree node. 
    In contrast, \citet{chlenski_fast_2024} proposes \hyperdt\, taking a hyperplane-based perspective on Euclidean decision tree learning and following up with a modification of the splitting criterion to accommodate axis-inclined hyperplane splits.
    
    By exploiting the (almost-Euclidean) behavior of hyperplanes in the ambient space of the Lorentz model of hyperbolic space, \hyperdt\ is able to more closely parallel Euclidean tree learning.
    In particular, it is able to consider each spacelike dimension separately, and has identical training and inference complexity to its Euclidean counterparts.
    However, its modified, hyperplane-based splitting criteria are no longer compatible with the threshold-based decision tree algorithms, necessitating its own ad hoc implementation.
    
    In this paper, we rewrite \hyperdt\ as \fhdt, relying on the Beltrami-Klein model of hyperbolic space. 
    This representation allows geodesic decision boundaries to become (axis-parallel) Euclidean hyperplanes, recovering a thresholding-based variant of \hyperdt. \fhdt\ unlocks three advantages:
    \begin{enumerate}
        \item Leveraging optimized implementations is thousands of times faster than existing \hyperdt;
        \item It is intrinsically compatible with well-known Euclidean methods like \sklearn; and
        \item It easily extends to other tree-based paradigms, such as oblique decision trees or \textsc{XGBoost}.
    \end{enumerate}

    \begin{figure}[!t]
        \centering
        \includegraphics[width=\linewidth]{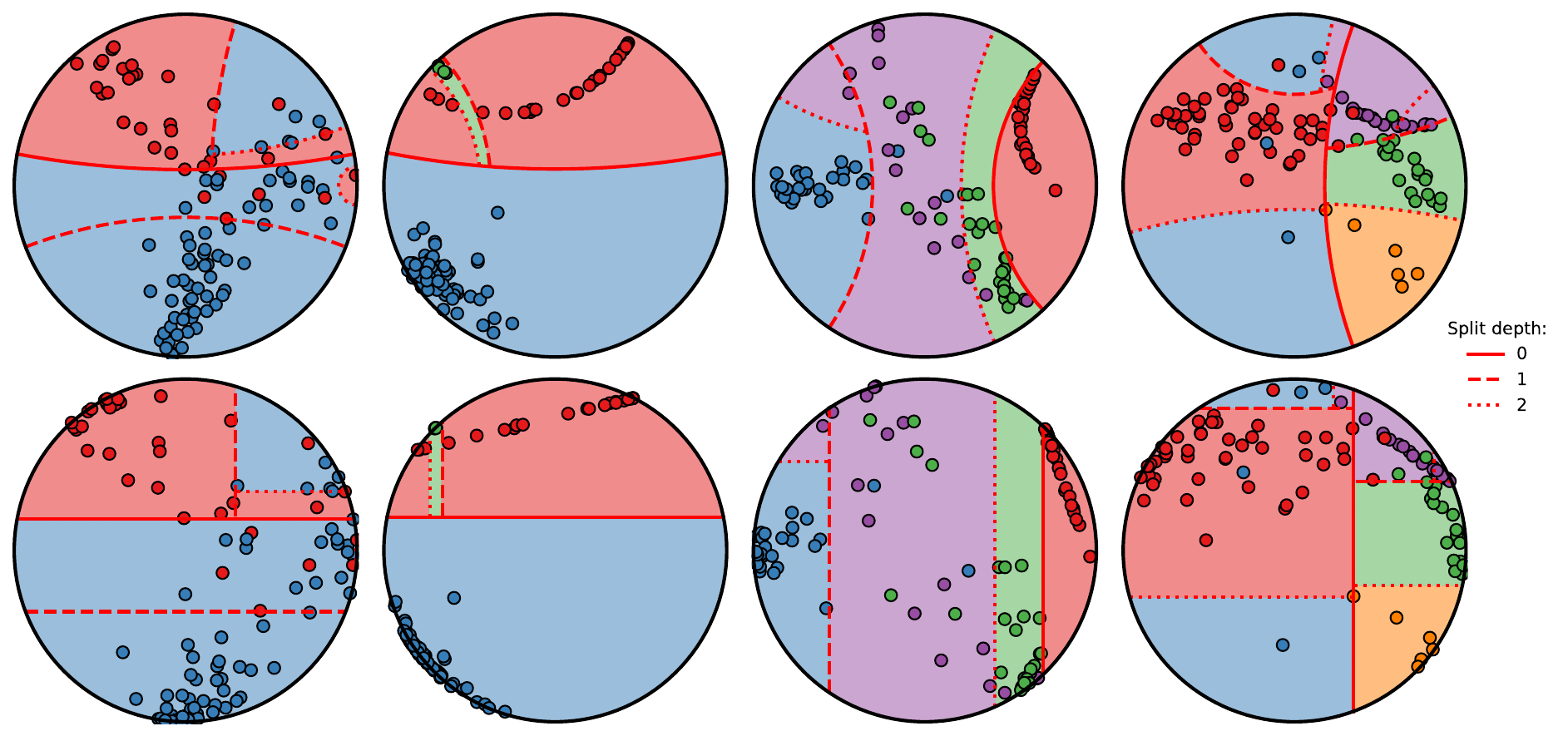}
        \caption{Original \hyperdt\ decision boundaries visualized in the Poincaré model (top) and in the Beltrami-Klein model (bottom). The top half of the figure is a reproduction of Figure 2 in \citet{chlenski_fast_2024}.}
        \label{fig:decision_boundaries}
    \end{figure}

    \subsection{Related work}
        \textbf{Non-Euclidean Decision Trees.}
        This work is most closely related to other works proposing the use of tree-based models in non-Euclidean spaces.
        \citet{chlenski_fast_2024} is the starting point for our model; \citet{chlenski_mixed-curvature_2025} extends decision trees to (products of) any constant-curvature space;
        \citet{doorenbos_hyperbolic_2023} offers a complementary, horosphere-based perspective on random forests in hyperbolic space. 
        \citet{tsagkrasoulis_random_2017} proposes a method to fit random forests for non-Euclidean \textit{labels}, while \citep{bauerschmidt_random_2021} explores connections between random spanning forests and hyperbolic symmetry.

        \textbf{Hyperbolic Classifiers.}
        The problem of classification and regression in hyperbolic spaces has received a lot of attention in the literature.
        Many approaches involve modifying neural networks to work with hyperbolic data \citep{chami_hyperbolic_2019, ganea_hyperbolic_2018, chen_fully_2022, bdeir_hyperbolic_2023, khan_hyperbolic_2024}.
        We are indebted to the literature on hyperbolic SVMs \citep{cho_large-margin_2018, fan_horospherical_2023}; the former makes use of the Klein model as well.
        Regression on hyperbolic manifolds has been studied by \citet{marconi_hyperbolic_2020}.
        
        \textbf{Machine Learning in the Beltrami-Klein Model.} 
        \citet{mao_klein_2024} extends the Klein model for neural nets and provides an excellent overview of the state of the literature, including applications of the Klein model for graph embeddings \citep{mcdonald_heat_2019, yang_towards_2023}, protein sequences \citep{ali_gaussian_2024}, minimal spanning trees \citep{garcia-castellanos_hypersteiner_2025}, and scene images \citep{bi_multiple_2017}.
        We highlight \citet{nielsen_hyperbolic_2010}, an early work using the Klein model to find Voronoi diagrams in hyperbolic space, whose discussion of partitions of the Beltrami-Klein model was foundational for our work.

\section{Preliminaries}
    \subsection{Hyperbolic Space}
        \subsubsection{The Lorentz Model of Hyperbolic Space}
            The Lorentz Model with curvature $K < 0$ in $d$ dimensions, $\L^d_K$,
            is embedded inside Minkowski space, a vector space in $\R^{d+1}$ equipped with the Minkowski inner product:
            \begin{equation}
                \mdot{\u, \v} =
                -u_0v_0 + \sum_{i=1}^d u_iv_i.
            \end{equation}
            Points in the Lorentz Model are constrained to lie on the upper sheet of a two-sheeted hyperboloid with a constant Minkowski inner product $K$, which is also called the curvature:
            \begin{equation}
                \L^d_K = \left\{
                    \u \in \R^{d+1} : \mdot{\u, \u} = \frac{1}{K},\ u_0 > 0
                \right\}.
                \label{eq:hyperboloid_defn}
            \end{equation}
            Distances between two points $\u, \v \in \L$, called geodesic distances, are computed as\footnote{ip: I think we should use arccosh arctan etc. instead of the inverse (negative 1 superscript)}
            \begin{equation}
                \delta_{\L}(\u, \v) = \frac{1}{\sqrt{-K}}\cosh^{-1}\left( 
                    -K\mdot{\u, \v}
                \right).
            \end{equation}
    
        \subsubsection{The Beltrami-Klein Model of Hyperbolic Space}
            The Beltrami-Klein Model  realizes hyperbolic space as the open ball of radius $-1/K$
            \begin{equation}
                \B^d_K = \left\{ 
                    \u \in \R^d : \edot{\u, \u} < -\frac{1}{K}
                \right\}.
            \end{equation}
            
            $\L^d_K$ and $\B^d_K$ are related to through the gnomonic projection $\phi_K: \L^d_K \to \B^d_K$ and its inverse $\phi^{-1}_K: \B^d_K \to \L^d_K$:
            \begin{align}
                \phi_K(u_0, u_1, \ldots, u_d) &= \left(
                    \frac{u_1}{u_0}, \frac{u_2}{u_0}, \ldots, \frac{u_d}{u_0}
                \right)
                \label{eq:phi}
                \\
                \phi^{-1}_K(v_1, \ldots v_d) &= \left( 
                    \frac{1}{\sqrt{1-K\|\v\|^2}},\ \frac{v_1}{\sqrt{1-K\|\v\|^2}},\ \ldots,\ \frac{v_d}{\sqrt{1 - K\|\v\|^2}} 
                \right).
                \label{eq:phi_inverse}
            \end{align}
            Note that the indices for \u\ start at 0, whereas the indices of \v\ start at 1, reflecting the use of an extra dimension in the Hyperboloid model. 
            This correspondence not only maps points but also sends hyperbolic geodesics (great hyperbola arcs on \L) to chords in \B, preserving the hyperbolic distance between points.

            The distance between $\u, \v \in \B$ is computed by composing $\phi^{-1}_K(\cdot)$ and $\delta_\L(\cdot)$ as follows:
            \begin{align}
                \nonumber
                \delta_\B(\u, \v) 
                & = \delta_\L(\phi_K^{-1}(\u), \phi_K^{-1}(\v))\\
                \nonumber
                &= \frac{1}{\sqrt{-K}}\cosh^{-1}\!\left(-K\,\edot{ \phi_K^{-1}(\u), \phi_K^{-1}(\v)}\right)\\
                &= \frac{1}{\sqrt{-K}}\cosh^{-1}\left(-K\frac{
                    1 - \edot{\u, \v}
                }{
                    \sqrt{(1-K\|\u\|^2)(1-K\|\v\|^2)}
                }\right).
            \end{align}

            Equivalently, $\delta_\B$ can be computed using the following geometric construction:
            if the chord through two interior points $\b$ and $\c$ meets the boundary of the Klein disk at points $\a$ and $\b$ (with the ordering $\a, \b, \c, \d$ along the chord), then the hyperbolic distance between $\b$ and $\c$ is given by
            \begin{equation}
                \delta_\B(\b, \c) = \frac{1}{2}\ln\left(\frac{
                    |\a\c| |\b\d|
                }{
                    |\a\b| |\c\d|
                }\right),
            \end{equation}
            where $|\a\b|$ denotes the Euclidean norm of the line segment connecting $\a$ and $\b$, and so on.

            \paragraph{Einstein Midpoints.}
                A key advantage of the Beltrami-Klein model we will make use of in this paper is its closed-form formulation of geodesic midpoints.
                Given two vectors $\u, \v \in \B$, the Einstein midpoint is given by
                \begin{equation}
                    \label{eq:einstein_midpoint}
                    m_\B(\u, \v) 
                    = 
                    \frac{\gamma_{\u}
                    \u + \gamma_{\v}\v}{\gamma_\u + \gamma_\v}, \text{ where }
                    \gamma_\x = \frac{1}{
                        \sqrt{1 - K\|\x\|^2}
                    }.
                \end{equation}
                Note that $\gamma_\x$ is simply the timelike dimension under the inverse gnomonic projection $\phi^{-1}_K$ given in Equation~\ref{eq:phi_inverse}.

                    
    \subsection{HyperDT}
        \hyperdt\ and \hyperrf, which were introduced in \citep{chlenski_fast_2024}, are generalizations of Decision Trees (Classification and Regression Trees, \citep{breiman_classification_2017}) and Random Forests \citep{breiman_random_2001} when $\X \in \L^{n \times d}_K$, $\y \in \R^n$.

        \subsubsection{Geodesically-Convex Splits}
            The central contribution of the \hyperdt\ algorithm is to observe that splits in conventional decision trees can be thought of in terms of dot products with the normal of a separating axis-parallel hyperplane:
            \begin{equation}
                S(\x, i, t) 
                = \Ind{x_d > t}
                = \Sign(\x \cdot \e^{(i)} - t)^+,
            \end{equation}
            where $\e^{(i)}$ is a one-hot vector in dimension $i \in \{0, \ldots d \}$, and $t$ can be thought of as a bias term inducing a translation of the decision boundary by $t$ away from the origin along $\e_i$.
            This observation naturally allows us to consider a more generic split function
            \begin{equation}
                S(\x, \n, t) = \Sign(\x \cdot \n - t),
            \end{equation}
            where $\n \in \R^{d+1}$ is the normal vector for any separating hyperplane, which we will call $H$.
            In \hyperdt, the set of candidate separating hyperplanes is restricted to those whose normal vectors take the form
            \begin{equation}
                \n(i, \theta) = (-\cos(\theta),\ 0,\ \ldots,\ 0,\ \sin(\theta),\ \ldots,\ 0),
            \end{equation}
            where dimensions $0$ and $i$ are the only nonzero dimensions, and where $t=0$, ultimately yielding the \hyperdt\ splitting criterion:
            \begin{equation}
                S(\x, i, \theta) 
                = \Sign(\x \cdot \n(i, \theta))
                = \Sign(\x_i \sin(\theta) - x_0 \cos(\theta)).
            \end{equation}
    
            Removing the bias term ensures geodesic convexity, as only homogeneous hyperplanes (i.e. hyperplanes containing the origin) have geodesically convex intersections with $\L$; on the other hand, parameterizing splits in terms of a spacelike dimension $i$ and an angle $\theta$ allows \hyperdt\ to retain the expressiveness and time complexity of traditional decision tree algorithms. 

        \subsubsection{Hyperbolic Angular Midpoints}
            In CART, if a split falls between $\u, \v \in \Xtrain$, then the boundary is conventionally placed directly in between \u\ and \v.
            That is, for a split in dimension $i$, we set the threshold using a simple arithmetic mean.
            In Euclidean space, this has the property that \u\ and \v\ are now equidistant from the separating hyperplane through $h_i = t$, which is a reasonable inductive bias for choosing where to separate two classes.

            In hyperbolic space, the naive midpoint is likewise the angular bisector
            \begin{equation}
                \theta_\text{naive} = \frac{
                    \theta_1 + \theta_2
                }{
                    2
                },
            \end{equation}
            but in this case the hyperplanes parameterized by $h_0\cos(\theta_1) = h_d\sin(\theta_1)$ and $h_0\cos(\theta_2) = h_d\sin(\theta_2)$ would not intersect $\L$ at points that are equidistant to the hyperplane $h_0\cos(\theta) = h_d\sin(\theta)$ under the hyperbolic distance metric $\delta_\L$.
            In order to ensure equidistance, \hyperdt\ instead uses a more complex midpoint formula derived from $\delta_\L$:
            \begin{equation}
                m_\L(\theta_1, \theta_2) = 
                \cot^{-1}(\alpha - \beta\sqrt{\alpha^2 - 1}), \text{ where }
                \alpha =\frac{
                    \sin(2\theta_1 - 2\theta_2)
                }{
                    2\sin(\theta_1 + \theta_2)\sin(\theta_2 - \theta_1)
                },\ 
                \beta = \Sign\left(
                    \theta_1 + \theta_2 - \pi
                \right).
                \label{eq:hyperbolic_midpoint}
            \end{equation}

\section{Speeding Up HyperDT}
    \begin{algorithm}[!b]
        \caption{Fast-\hyperdt\ training}
        \label{alg:fast_hyperdt}
        \DontPrintSemicolon

\KwIn{Dataset $\X \in \L_K^{n \times d}$, labels $\y \in \R^n$, curvature $K < 0$}
\KwOut{Hyperbolic Random Forest $\Rf = \Trees$ (where $\ntrees=1$ for a single tree)}

\SetKwFunction{FastHyperDT}{Train}
\SetKwFunction{Pre}{Preprocess}
\SetKwFunction{Post}{Postprocess}
\SetKwFunction{AdjustNode}{AdjustThresholds}

\SetKwProg{Fn}{Function}{:}{}
\Fn{\Pre{$\X, K$}}{
    \KwRet $\phi_K(\X)$ \tcp*{Project to Klein model}
}

\Fn{\AdjustNode{$\T, \X, K$}}{
    \If{$\T$ is a leaf}{
        \KwRet
    }
    $d \leftarrow$ $\T$.feature\;
    $t \leftarrow$ $\T$.threshold\;
    $\X \leftarrow \X[\T.\text{subsample\_indices}]$ \tcp*{Random forests may use bootstrapped subsamples of $\X$}
    $\X^+ \leftarrow \{ \x \in \X : x_d > t\}$\;
    $\X^- \leftarrow \{ \x \in \X : x_d \leq t\}$\;
    $L \leftarrow \max_{\x \in \mathbf{X^-}} \{ x_d \}$\;
    $R \leftarrow \min_{\x \in \mathbf{X^+}} \{ x_d \}$\;
    node.threshold $\leftarrow$ \texttt{EinsteinMidpoint}($L, R, K$) \tcp*{Uses Equation~\ref{eq:einstein_midpoint}}
    \AdjustNode{$\T.\mathrm{left}, \X^-$}\;
    \AdjustNode{$\T.\mathrm{right}, \X^+$}\;
}

\Fn{\Post{$\Rf, \X, K$}}{
    \For{$\T \in \Rf$}{
        \AdjustNode{$\T, \X, K, \T$}\;
    }    
    \KwRet $\Rf$\;
}

\Fn{\FastHyperDT{$\X_\L, \y, K$}}{
    $\X_\B \leftarrow$ \Pre{$\X_\L, K$}\;
    $\Rf_\E \leftarrow \texttt{TrainEuclideanModel}(\X_\B, \y)$ \tcp*{For instance, DecisionTreeClassifier.fit()}
    $\Rf_\B \leftarrow$ \Post{$\Rf_\E, \X_\B, K$}\;
    \KwRet $\Rf_\B$\;
}

    \end{algorithm}

    \begin{algorithm}[!t]
        \caption{Fast-\hyperdt\ inference ($O(nh)$ version)}
        \label{alg:fast_hyperdt_inference_Onh}
        \DontPrintSemicolon

\KwIn{Dataset $\X \in \L_K^{n \times d}$, trained model $\Rf_\B$}
\KwOut{Predictions $\hat{\y} \in \R^d$}

\SetKwFunction{FastHyperDT}{Predict}
\SetKwFunction{PredictNode}{PredictNode}

\SetKwProg{Fn}{Function}{:}{}

\Fn{\PredictNode{$\x, \T$}}{
    \If{$\T$ is a leaf}{
        \KwRet $\T.\text{label}$\;
    }
    $d \leftarrow \T.\text{feature}$\;
    $t \leftarrow \T.\text{threshold}$\;
    $\phi(\x)_d \leftarrow x_d / x_0$ \tcp*{Compute Klein model coordinates selectively}
    \eIf{$\phi(\x)_d \leq t$}{
        \KwRet \PredictNode{$\x, \T.\mathrm{left}$}\;
    }{
        \KwRet \PredictNode{$\x, \T.\mathrm{right}$}\;
    }
}

\Fn{\FastHyperDT{$\X_\L, \Rf_\B$}}{
    $\hat{\y} \leftarrow$ array of size $|\X_\L|$\;
    \For{$i \leftarrow 1$ \KwTo $n$}{
        V $\leftarrow \{ \}$ \tcp*{Initialize empty multiset}
        \For{$\T \in |\Rf_\B|$}{
            $v \leftarrow$ \PredictNode{$\X_\B[i], \T$}\;
            V $\leftarrow$ $V \cup \{v\}$ \tcp*{Add prediction to multiset}
        }
        $\hat{\y}[i] \leftarrow \texttt{Resolve}(V)$ \tcp*{However your ensemble aggregates tree votes}
    }
    \KwRet $\hat{\y}$\;
}
    \end{algorithm}
    
    \begin{algorithm}[!t]
        \caption{Fast-\hyperdt\ inference (simple $O(n(d + h))$ version)}
        \label{alg:fast_hyperdt_inference}
        \DontPrintSemicolon

\KwIn{Dataset $\X \in \L_K^{n \times d}$, curvature $K < 0$, trained model $\Rf_\B$}
\KwOut{Predictions $\hat{\y} \in \R^d$}

\SetKwFunction{FastHyperDT}{Predict}
\SetKwFunction{Pre}{Preprocess}

\SetKwProg{Fn}{Function}{:}{}

\Fn{\FastHyperDT{$\X_\L, K, \Rf_\B$}}{
    $\X_\B \leftarrow$ \Pre{$\X_\L, K$}\;
    \KwRet $\texttt{PredictEuclidean}(\X_\B, \Rf_\B)$ \tcp*{For instance, DecisionTreeClassifier.predict()}
}
    \end{algorithm}
    
    The algorithm for speeding up \hyperdt\ training (see Algorithm~\ref{alg:fast_hyperdt}) involves three key steps:
    
    \begin{enumerate}
        \item \textbf{Preprocessing:} To preprocess $\X \in \L^{n \times d}$ for compatibility with Euclidean classifiers, it suffices to project $\X$ to the Beltrami-Klein model $\X_\B \in \B^{n \times d}$ using $\phi_K$ as defined in Equation~\ref{eq:phi}.
        
        \item \textbf{Train using \sklearn:} Train a \sklearn-compatible Decision Tree or Random Forest on the transformed data $\X_\B$, yielding a trained Euclidean predictor $\Rf$.
        
        \item \textbf{Postprocess:} Recompute geodesic midpoints for all decision boundaries in $\Rf$ and store them in the corrected hyperbolic predictor $\Rf_\B$.
        Given two points $\u, \v \in \B,$ the corrected midpoint is simply the Einstein midpoint along the line connecting $\u$ and $\v$, as in Equation~\ref{eq:einstein_midpoint}.
        This modification can be applied recursively to a trained tree, taking $u$ and $v$ as the closest values to either side of the learned threshold.
    \end{enumerate}
    
    This approach allows us to leverage efficient Euclidean implementations of decision trees and random forests (for instance, \sklearn's RandomForestClassifier class) while maintaining the correct geometry of hyperbolic space with curvature $K$.  

    To speed up inference, we propose Algorithm~\ref{alg:fast_hyperdt_inference_Onh}, which preserves the asymptotic complexity of CART and HyperDT (see Theorem~\ref{theorem:complexity}).
    For most practical purposes, it is generally faster to preprocess and use the built-in prediction functionality of the base model instead, as in Algorithm~\ref{alg:fast_hyperdt_inference}. 

    \subsection{Extensions}
        \subsubsection{Poincaré Ball Model}
            It is common for hyperbolic machine learning approaches to use the Poincaré ball model (which we will denote $\P_K$) rather than the hyperboloid or Beltrami-Klein models.
            We omit a detailed discussion of the properties of the Poincaré ball model, except to mention that it is possible to apply our approach to $\X_P \in \P_K^{n, d}$ using a projection $\rho_K: \P_K^d \to \B_K^d$.
            We include its inverse for completeness.
            \begin{align}
                \rho_K(u_1, \ldots, u_d) &= \left(
                    \frac{u_1}{1 + \sqrt{1 - K\|\mathbf{u}\|^2}},\ 
                    \frac{u_2}{1 + \sqrt{1 - K\|\mathbf{u}\|^2}},\ 
                    \ldots,\ 
                    \frac{u_d}{1 + \sqrt{1 - K\|\mathbf{u}\|^2}}
                \right)\\
                \rho_K^{-1}(v_1, \ldots, v_d) &= \left(
                    \frac{2v_1}{1 - K\|\v\|^2},\ 
                    \frac{2v_2}{1 - K\|\v\|^2},\ 
                    \ldots,\ 
                    \frac{2v_d}{1 - K\|\v\|^2}
                \right)
            \end{align}
            Once the points are projected to the Beltrami-Klein model, the rest of the computations proceed exactly as in the hyperboloid case. 

            All HyperDT classifiers are initialized with an \texttt{input\_geometry} hyperparameter, which could be one of \texttt{hyperboloid}, \texttt{klein}, or \texttt{poincare}, and the appropriate conversion is applied during preprocessing.
            
        \subsubsection{XGBoost}
            The extension of our approach to XGBoost \citep{chen_xgboost_2016} follows naturally from the decision tree implementation, as XGBoost fundamentally relies on the same axis-parallel splitting mechanism. 
            While XGBoost introduces gradient boosting, regularization, and sophisticated loss function optimization, the geometric interpretation of its decision boundaries remains unchanged. 
            Each tree in the trained XGBoost ensemble can be postprocessed analogously to a \sklearn\ Random Forest.
            
            The primary implementation challenge lies in accessing individual tree structures within the XGBoost model, as the internal representation differs from \sklearn's. 
            Nevertheless, once the nodes are accessible, the threshold adjustment process remains identical, maintaining the correct hyperbolic geometry throughout the ensemble. 
            We allow XGBoost as a valid backend in our implementation.

            Because XGBoost, unlike \sklearn, does not store a record of sample indices used to train each individual tree, we introduce a \texttt{override\_subsample} hyperparameter to our XGBoost model.
            By default, \texttt{override\_subsample = True}, resetting the \texttt{subsample} hyperparameter of all XGBoost models to 1.0.
            If users choose to set \texttt{override\_subsample = False}, the XGBoost model trains using subsampling, but the postprocessing uses the entire training set.
            In this case, a warning is issued that postprocessing is approximate.

        \subsubsection{LightGBM}
            Similar to XGBoost, LightGBM \citep{ke_lightgbm_2017} is a popular library for gradient-boosted decision trees.
            Also analogous to XGBoost, the innovations of LightGBM do not affect split geometry, so trained LightGBM models can be postprocessed as usual by editing thresholds in LightGBM's proprietary text-based model format.
            The subsampling issue also affects LightGBM models, so if users choose to set \texttt{bagging\_fraction != 0}, the model also issues a warning and performs approximate postprocessing on the entire dataset.
            
        \subsubsection{Oblique Decision Trees}
            For oblique decision trees, which use linear combinations of features rather than axis-parallel splits, our approach requires a natural extension to the projection mechanism. 
            Instead of projecting onto basis dimension axes, we project data points onto the normal vector of each oblique hyperplane.
            Given a hyperplane defined by $\x \cdot \n - t  = 0$, we compute the projection of each point onto the normal vector $\n$. 
            The threshold correction then proceeds identically to the axis-parallel case, applying the Einstein midpoint formula from Equation~\ref{eq:einstein_midpoint} to the scalar projections rather than to individual feature values. 
            
            This generalization maintains the geometric consistency of the decision boundaries in hyperbolic space while leveraging the increased flexibility of oblique splits. 
            The implementation requires modifying the node structure to store the hyperplane parameters $\mathbf{w}$ and applying the projection during both training and inference phases, but the core principle of threshold adjustment remains unchanged.

            Using this projection-based approach, we provide a hyperbolic variant of the HHCart \citep{wickramarachchi_hhcart_2015, wickramarachchi_reflected_2019} and CO2 Forest \citep{norouzi_co2_2015} algorithms, as implemented in the \texttt{scikit-obliquetree} library \citep{ecnu_oblique_2021}.


\section{Theoretical Results}
    \label{sec:theoretical_results}
    In this section, we establish the theoretical foundation for our Fast-\hyperdt\ algorithm by proving its equivalence to the original \hyperdt\ method while demonstrating its computational advantages. 
    The key insight of our approach lies in exploiting the geometric properties of the Beltrami-Klein model, where geodesics appear as straight lines, to leverage efficient Euclidean decision tree implementations.
    

    \begin{lemma}[Hyperplane Classification Equivalence]
        \label{lemma:hyperplane_equivalence}
        Let $\n = (-\cos\theta, 0,...,0,\sin\theta,0,...) \in \mathbb{R}^{d+1}$ be a normal vector for a hyperplane.
        For any $\x \in \mathbb{L}^d_K$, the \hyperdt\ decision rule satisfies:
        \begin{equation}
            \Sign(\edot{\x, \n}) = \Sign\left(\phi_K(\x)_i - \cot(\theta)\right).
        \end{equation}
    \end{lemma}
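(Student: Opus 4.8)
The plan is to reduce the claim to an elementary factorization, with the only delicate point being the sign of the factor we divide out. First I would expand the Euclidean inner product explicitly, exploiting the sparsity of $\n$: since $\n$ has nonzero entries only in coordinates $0$ and $i$, we obtain
\begin{equation}
    \edot{\x, \n} = -x_0\cos\theta + x_i\sin\theta,
\end{equation}
which is precisely the \hyperdt\ split value $x_i\sin\theta - x_0\cos\theta$ appearing in the splitting criterion.

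Next I would factor out $x_0\sin\theta$ to expose the gnomonic coordinate. Writing
\begin{equation}
    \edot{\x, \n} = x_0\sin\theta\left(\frac{x_i}{x_0} - \cot\theta\right) = x_0\sin\theta\left(\phi_K(\x)_i - \cot\theta\right),
\end{equation}
where the last equality uses the definition of the gnomonic projection $\phi_K(\x)_i = x_i/x_0$ from Equation~\ref{eq:phi}. This immediately reduces the lemma to showing that the prefactor $x_0\sin\theta$ is strictly positive, so that it may be dropped inside $\Sign(\cdot)$ without affecting the result.

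The positivity of $x_0$ comes for free: every point of the Lorentz model satisfies $x_0 > 0$ by the defining constraint in Equation~\ref{eq:hyperboloid_defn}. The one nontrivial ingredient---and the main obstacle---is the sign of $\sin\theta$. This requires invoking the convention, implicit in the \hyperdt\ parameterization, that the angle $\theta$ ranges over an interval on which $\sin\theta > 0$ (e.g.\ $\theta \in (0, \pi)$), so that each candidate hyperplane is represented by a unique admissible normal direction. I would state this domain restriction explicitly and check that it is consistent with the range over which the midpoint formula in Equation~\ref{eq:hyperbolic_midpoint} is applied.

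Granting $\sin\theta > 0$, the prefactor $x_0\sin\theta$ is positive, and hence $\Sign(\edot{\x, \n}) = \Sign(\phi_K(\x)_i - \cot\theta)$, which completes the argument. If one wished to avoid the domain assumption entirely, I would instead split into cases on the sign of $\sin\theta$ and absorb the resulting sign flip into the threshold comparison; but the monotone parameterization used in \hyperdt\ makes this case analysis unnecessary, so I would relegate it to a remark rather than the main proof.
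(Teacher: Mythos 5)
Your proof is correct and follows essentially the same route as the paper's: expand the inner product, divide out the positive prefactor, and conclude that the sign is preserved. In fact you are slightly more careful than the paper, whose proof divides by $\sin\theta$ without remarking on its sign (it only notes $x_0 > 0$); your explicit appeal to the convention that $\theta$ ranges over an interval with $\sin\theta > 0$ closes a small gap that the paper leaves implicit.
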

    \begin{proof}
        \begin{equation}
            \edot{\x, \n} > 0 
            \iff - x_0\cos(\theta) +  x_i\sin(\theta) > 0 
            \iff \frac{x_i}{x_0} > \frac{\cos(\theta)}{\sin(\theta)}
            \iff \phi_K(\x)_i > \cot(\theta).
        \end{equation}
        Since $x_0 > 0$ in the Lorentz model, dividing by $x_0$ never affects the direction of this inequality.
    \end{proof}
    
    \begin{lemma}[Threshold Invariance]
        \label{lemma:threshold_invariance}
        For any decision tree trained to optimize an information gain objective, the feature space partitioning is invariant to the choice of threshold placement between adjacent feature values. Specifically, if $u < v$ are consecutive observed values of a feature, then any threshold $t \in (u, v)$ produces identical tree structures and decision boundaries.
    \end{lemma}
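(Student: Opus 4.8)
The plan is to reduce the statement to a single structural observation: a greedy impurity-based learner ``sees'' a candidate split only through the bipartition of samples it induces, never through the numerical threshold itself. I would begin by writing the information-gain objective at a node with sample set $S$ explicitly as a function of the two child label multisets. For any impurity functional $I(\cdot)$ (Gini, entropy, or variance), the gain of a split $(S_L, S_R)$ is
\begin{equation}
    \mathrm{IG}(S_L, S_R) = I(S) - \frac{|S_L|}{|S|}\,I(S_L) - \frac{|S_R|}{|S|}\,I(S_R),
\end{equation}
which depends on the candidate threshold \emph{only} through which samples land in $S_L$ versus $S_R$.

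The core step is then to show that this induced bipartition is constant across the gap. Fix a feature $i$ and let $u < v$ be consecutive observed values of that feature among the samples in $S$; by ``consecutive'' I mean no sample has its $i$-th coordinate in the open interval $(u, v)$. For any threshold $t \in (u, v)$ we have $\{\x \in S : x_i \le t\} = \{\x \in S : x_i \le u\}$ and $\{\x \in S : x_i > t\} = \{\x \in S : x_i \ge v\}$, independent of $t$. Because the interval is open and empty of sample values, the same conclusion holds whether the learner uses $\le$ or $<$ to define the split. Consequently $\mathrm{IG}$ evaluated on this candidate split is a constant function of $t$ over $(u, v)$.

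I would then lift this to the full tree by induction on depth. At the root, the greedy rule selects the feature and split maximizing $\mathrm{IG}$; since every candidate split's gain is itself independent of where its threshold sits inside the corresponding gap, the maximizer---viewed as a \emph{partition} of $S$---is unaffected by the in-gap placement. Recursing on each child (now with its inherited subset of samples and its own induced gaps) and applying the inductive hypothesis yields that the entire tree topology, the feature chosen at each node, and the partition of the training data into leaves are all identical for any admissible choice of in-gap thresholds. This is exactly the invariance needed to justify relocating each learned threshold to the Einstein midpoint in the postprocessing step of Algorithm~\ref{alg:fast_hyperdt}.

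I expect the main obstacle to be bookkeeping around \emph{tie-breaking} and pinning down the intended sense of ``identical decision boundaries.'' For ties between two \emph{distinct} candidate splits (distinct partitions with equal gain), a well-defined learner must break ties by a rule that reads only the partition data---e.g., smallest feature index then smallest gap---and I would argue such a rule is automatically consistent here, since the gain values being compared are themselves threshold-independent, so no choice of $t$ within a gap can change the outcome of the comparison. Finally, I would clarify that ``identical decision boundaries'' refers to the induced partition of the training data into leaves, rather than to the precise geometric location of each hyperplane: the location within each empty gap is genuinely a free parameter, and it is precisely this slack that \fhdt\ exploits when it replaces the Euclidean midpoint with the hyperbolically correct Einstein midpoint.
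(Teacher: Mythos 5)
Your proposal is correct and follows essentially the same route as the paper's proof: the key observation in both is that information gain is a function only of the induced bipartition of the samples, which is constant for every threshold in the empty open interval $(u,v)$, so the greedy selection at each node is unaffected. You go somewhat further than the paper by making the lift to the whole tree an explicit induction on depth and by flagging the tie-breaking caveat, which the paper instead defers to the hypothesis of Theorem~\ref{theorem:algorithmic_equivalence}; both additions are sound and, if anything, tighten the published argument.
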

    \begin{proof}
        Let us define the Information Gain (IG) for splitting a dataset $(\X, \y)$ using threshold $t$ on feature $i$:
        \begin{align}
            \nonumber
            \operatorname{IG}(\y, t) &= H(\y) - \left(\frac{|\y^+|}{|\y|}H(\y^+) + \frac{|\y^-|}{|\y|}H(\y^-)\right), \text{ where }\\
            \y^+ &= 
            \left\{y_j \in \y \ :\  (x_{j})_{i} \geq t\right\}, \quad 
            \y^- = 
            \left\{y_j \in \y \ :\  (x_{j})_{i} < t\right\}.
        \end{align}
        Here, $H(\cdot)$ denotes some impurity measure, e.g. Gini impurity.
        Now, for some values $u < v$ such that no $\x \in \X$ has a value between $u$ and $v$ at feature $i$, then the partition of $\y$ into $\y^+$ and $\y^-$ remains identical for any choice of $t \in (u,v)$.
        
        Because information gain depends solely on this partition and not on the specific threshold value, the IG value remains constant for all $t \in (u,v)$. 
        Consequently, any such threshold produces identical tree structures and decision boundaries when optimizing an information gain objective.
    \end{proof}

    \begin{lemma}[Midpoint Equivalence]
        \label{lemma:midpoint_isomorphism}
        Let $\u, \v \in \L^d_{K}$ be two points in the hyperboloid model. 
        Define $\theta_x = \cot^{-1}\left(\frac{x_d}{x_0}\right)$ 
        for any point $\x = (x_0,\ \ldots,\ x_d,\ \ldots) \in \L^d_{K}$. 
        Similarly, let $\phi: \L^d_{K} \to \B^d_{K}$ be the usual gnomonic projection into the Klein model so that $\phi(\x)_d = \frac{x_d}{x_0}$. 
        Then the hyperbolic angular midpoint of $\u,\v$ in $\L^d_{K}$ as defined in Eq.~\ref{eq:hyperbolic_midpoint}
        is the same as the Einstein midpoint of $\phi(\u), \phi(\v)$ in $\B^d_{K}$ as defined in Eq.~\ref{eq:einstein_midpoint}:
        \begin{equation}
        \theta_{\L}(\theta_u, \theta_v) =
        \theta_{m_\B(\phi(\u), \phi(\v))} = 
        \cot^{-1}\left(
            \frac{m_{\B}\left(\phi(\u)_d, \phi(\v)_d\right)_d}{m_{\B}\left(\phi(\u)_d, \phi(\v)_d\right)_0}
        \right)
        \end{equation}
    \end{lemma}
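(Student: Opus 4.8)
The plan is to evaluate both sides as explicit functions of the raw Lorentz coordinates $u_0,u_d,v_0,v_d$ and show they agree. The key simplification comes from the remark following Equation~\ref{eq:einstein_midpoint}: the Lorentz factor $\gamma_\x$ is exactly the timelike coordinate of the inverse gnomonic lift, so $\gamma_{\phi(\u)} = u_0$ and $\gamma_{\phi(\v)} = v_0$. Since $\phi(\u)_d = u_d/u_0$, the weighted numerator of the Einstein midpoint collapses by cancellation: $\gamma_{\phi(\u)}\phi(\u)_d = u_0\cdot(u_d/u_0) = u_d$. Hence the $d$-th coordinate of the Einstein midpoint is simply the normalized Minkowski average
\begin{equation}
    m_\B(\phi(\u),\phi(\v))_d = \frac{u_d + v_d}{u_0 + v_0},
\end{equation}
and, because the inverse lift scales all coordinates by a common factor, the ratio appearing on the right-hand side of the lemma is exactly this number. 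Thus the entire statement reduces to the scalar identity $\cot\big(m_\L(\theta_u,\theta_v)\big) = (u_d+v_d)/(u_0+v_0)$.

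Second, I would rewrite this identity purely in terms of the angles. Everything depends only on coordinates $0$ and $d$, so I would work in the $(x_0,x_d)$ plane, where the hyperboloid constraint fixes the normalization $u_0 = \sin\theta_u/\sqrt{-\cos 2\theta_u}$ and $u_d = \cos\theta_u/\sqrt{-\cos 2\theta_u}$ (and likewise for $\v$), using $\cot\theta_x = x_d/x_0$. Substituting these into $(u_d+v_d)/(u_0+v_0)$ turns the right-hand side into a closed-form trigonometric expression in $\theta_u,\theta_v$, which I would then match against $\alpha - \beta\sqrt{\alpha^2-1}$ from Equation~\ref{eq:hyperbolic_midpoint} via product-to-sum identities.

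A cleaner alternative I would pursue in parallel avoids the trigonometric grind altogether: both midpoints are characterized by the same extremal property. The homogeneous hyperplane with Klein threshold $c$ is $\{\y : y_d = c\}$, and using $\sinh\delta_\L(\x, H) \propto |x_d - c\,x_0|$ one checks that $\u$ and $\v$ are $\delta_\L$-equidistant from it precisely when $c = (u_d+v_d)/(u_0+v_0)$ (the other algebraic root $c=(u_d-v_d)/(u_0-v_0)$ fails to separate $\u$ and $\v$ and lies outside the disk). Since $m_\L$ was constructed in \citet{chlenski_fast_2024} to enforce exactly this equidistance, and the Einstein midpoint produces the same $c$, the two must coincide; the lemma then follows by applying $\cot^{-1}$.

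The delicate step in either route is the sign and branch bookkeeping carried by $\beta = \Sign(\theta_1+\theta_2-\pi)$. I would note that $\alpha+\sqrt{\alpha^2-1}$ and $\alpha-\sqrt{\alpha^2-1}$ are reciprocals, so $\beta$ is precisely what selects the in-disk root $|c|<1$ (equivalently, the angle $\theta_m$ lying between $\theta_u$ and $\theta_v$) and fixes the correct branch of $\cot^{-1}$. I would also verify that the curvature $K$ enters only through a common rescaling of the Klein radius, the hyperboloid normalization, and $\gamma_\x$, so that the ratio $(u_d+v_d)/(u_0+v_0)$ — and hence the identity — is curvature-independent; if this bookkeeping proves unwieldy, reducing to $K=-1$ by the homothety $\x \mapsto \sqrt{-K}\,\x$ and invoking scale invariance of the angular coordinate suffices.
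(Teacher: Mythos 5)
Your preferred route (the ``cleaner alternative'') is essentially the paper's own proof: both arguments reduce the Einstein midpoint to the normalized Minkowski average so that its Klein coordinate is $(u_d+v_d)/(u_0+v_0)$, and both then conclude by an equidistance-plus-uniqueness argument, citing \citet{chlenski_fast_2024} for the fact that $m_\L$ satisfies the same equidistance property. The only substantive difference is cosmetic: you phrase equidistance as point-to-hyperplane via $\sinh\delta_\L(\x,H)\propto|x_d-c\,x_0|$ (arguably closer to how $m_\L$ is actually defined), whereas the paper verifies $\mdot{m_\B,\u}=\mdot{m_\B,\v}$ directly from the hyperboloid constraint $u_0^2-u_d^2=v_0^2-v_d^2=1/K$; your first, purely trigonometric route is only sketched and is not needed given the second.
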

    
    \begin{proof}
        We will show that the hyperbolic angular midpoint in the hyperboloid model and the Einstein midpoint in the Klein model correspond to the same point by verifying that both produce points equidistant from the original endpoints.
        
        In hyperbolic geometry, midpoints are unique: given any two points, there exists exactly one point equidistant from both along the geodesic connecting them. 
        This is implied by the geodesic convexity of hyperbolic spaces \citep{ratcliffe_hyperbolic_2019}. 
        Therefore, if we can show that both constructions yield points in the same 2-D subspace with equal distances to the endpoints, they must be the same point.
        
        Consider two points $\u, \v \in \L^d_K$. 
        Let $\theta_\L(\u, \v)$ be the hyperbolic angular midpoint of $\u$ and $\v$, as defined in Equation~\ref{eq:hyperbolic_midpoint}.
        Let $\m_\L$ denote the unique point in $\L \cap h_0\cos(\theta_L) - h_d\sin(\theta_\L) = 0$.
        Then by the definition of a midpoint,
        \begin{equation}
            \delta_\L(m_\L, \u) = \delta_\L(m_\L, \v).
        \end{equation}
        This property of the hyperbolic midpoint is derived in \citet{chlenski_fast_2024}.
        
        The Einstein midpoint of two points $\mathbf{u}, \mathbf{v} \in \mathbb{L}^n_K$ can equivalently be written as:
        \begin{equation}
            m_\B = \frac{\mathbf{u} + \mathbf{v}}{\|\mathbf{u} + \mathbf{v}\|_{\mathbb{L}}} \cdot \frac{1}{\sqrt{K}},
        \end{equation}
        where $\|\mathbf{w}\|_{\mathbb{L}} = \sqrt{-K\langle \mathbf{w}, \mathbf{w} \rangle_{\mathbb{L}}}$ for a timelike vector $\mathbf{w}$.
        
        Without loss of generality, assume $\mathbf{u} = (u_0,\ 0,\ \ldots,\ u_d,\ \ldots,\ 0)$ and $\mathbf{v} = (v_0,\ 0,\ \ldots,\ v_d,\ \ldots,\ 0)$ where the nonzero components are in positions 0 and $d$.%
        \footnote{
            This is already the case for the axis-parallel splits used in conventional decision trees. 
            For oblique boundaries, it is always possible to rotate the input such that its normal vector is sparse in this way.
        } 
        Simplifying the Einstein midpoint formula,
        \begin{equation}
            m_\B = \frac{(u_0+v_0,\ 0,\ \ldots,\ u_d+v_d,\ \ldots,\ 0)}{K\sqrt{-[(u_0+v_0)^2 - (u_d+v_d)^2]}}
        \end{equation}
        
        To prove equidistance, we compute:
        \begin{align}
            \mdot{m_\B, \u} &= 
            \frac{
                (u_0+v_0)u_0 - (u_d+v_d)u_d
            }{
                K\sqrt{-[(u_0+v_0)^2 - (u_d+v_d)^2]}
            }\\
            \mdot{m_\E, \v} &= \frac{
                (u_0+v_0)v_0 - (u_d+v_d)v_d
            }{
                K\sqrt{-[(u_0+v_0)^2 - (u_d+v_d)^2]}
            }
        \end{align}
        
        For these to be equal, we need:
        \begin{align}
            \nonumber
            (u_0+v_0)u_0 - (u_d+v_d)u_d &= (u_0+v_0)v_0 - (u_d+v_d)v_d\\
            \nonumber
            u_0^2 + u_0v_0 - u_d^2 - u_dv_d &= v_0^2 + u_0v_0 - v_d^2 - u_dv_d\\
            u_0^2 - u_d^2 &= v_0^2 - v_d^2.
        \end{align}
        Since $u_0^2 - u_d^2 = v_0^2 - v_d^2 = 1 / K$ from the hyperboloid constraint (Eq.~\ref{eq:hyperboloid_defn}), the equation is satisfied.
        Therefore, $\delta_{\mathbb{L}}(m_\B, \mathbf{u}) = \delta_{\mathbb{L}}(m_\B, \mathbf{v})$, proving that the Einstein midpoint is equidistant from points $\mathbf{u}$ and $\mathbf{v}$ under the hyperbolic distance.
    \end{proof}

    \begin{remark}
        The three lemmas correspond directly to our algorithm's stages:
        (1) Klein projection enables Euclidean thresholds (Lemma~\ref{lemma:hyperplane_equivalence};
        (2) Invariance permits off-the-shelf training (Lemma~\ref{lemma:threshold_invariance}); and
        (3) Midpoint correction recovers hyperbolically equidistant decision boundaries (Lemma~\ref{lemma:midpoint_isomorphism}).
    \end{remark}
    
    \begin{theorem}[Algorithmic Equivalence]
        \label{theorem:algorithmic_equivalence}
        Assuming that ties in information gain never occur, or are handled identically by \hyperdt\ and \fhdt, both methods produce identical decision boundaries.
        More precisely: letting $\mathcal{D}_H$ be the partition of $\L{}$ induced by \hyperdt{} and $\mathcal{D}_F$ be the partition of $\B{}$ induced by Fast-\hyperdt{} on the same dataset, $\phi(\mathcal{D}_H) \equiv \mathcal{D}_F$.
    \end{theorem}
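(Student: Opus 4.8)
The plan is to argue by structural induction on the recursive node-construction process, showing that at every node \hyperdt{} (operating on a subset $\X \subset \L^d_K$) and \fhdt{} (operating on $\phi(\X) \subset \B^d_K$) select the same feature, induce the same bipartition of the data, and ultimately place geometrically identical decision boundaries. The three lemmas supply precisely the three facts needed at a single node, so the real work of the theorem is to chain them together and push the conclusion through the recursion, with the base case being a leaf on which both algorithms simply emit a label over an unpartitioned cell.

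First I would establish a \emph{split-candidate correspondence}. Fix a spacelike dimension $i$. By Lemma~\ref{lemma:hyperplane_equivalence}, the \hyperdt{} angular split with normal $\n(i,\theta)$ partitions the points exactly as the Klein-coordinate threshold split at $t=\cot(\theta)$. Since $\theta\mapsto\cot(\theta)$ is strictly monotone on $(0,\pi)$, sorting the points by angle $\theta_x=\cot^{-1}(x_i/x_0)$ yields the same ordering (up to a reversal, which is immaterial) as sorting them by Klein coordinate $\phi(\x)_i=x_i/x_0$. Hence the family of label bipartitions realizable by angular cuts in dimension $i$ coincides with the family realizable by Klein thresholds in dimension $i$, giving a bijection between the adjacent-value cut points of the two representations.

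Next I would transfer this to \emph{split selection}. Both algorithms score a candidate split by the information gain of the induced label bipartition, and by the argument inside the proof of Lemma~\ref{lemma:threshold_invariance} this quantity depends only on the bipartition, not on the exact threshold (or angle) used to realize it. Combined with the candidate correspondence above, the two algorithms therefore assign the same information-gain value to each candidate bipartition across all dimensions; under the stated hypothesis that ties are broken identically, their argmax selects the same feature $i$ and the same bipartition $(\X^-,\X^+)$. Having fixed the bipartition, I would invoke Lemma~\ref{lemma:midpoint_isomorphism} to match the boundaries exactly: \hyperdt{} places its boundary at the hyperbolic angular midpoint $m_\L$ (Eq.~\ref{eq:hyperbolic_midpoint}) of the two data points bracketing the split, while \fhdt{}'s \texttt{AdjustThresholds} routine resets the threshold to the Einstein midpoint $m_\B$ (Eq.~\ref{eq:einstein_midpoint}) of those same two points; Lemma~\ref{lemma:midpoint_isomorphism} says $\phi$ carries $m_\L$ to $m_\B$, so the geodesic separating surfaces agree under the gnomonic projection. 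The split sends each algorithm the same two subsets (related by $\phi$), so applying the inductive hypothesis to each child and reassembling yields $\phi(\mathcal{D}_H)\equiv\mathcal{D}_F$.

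The main obstacle I anticipate is the candidate-correspondence step: one must verify carefully that \hyperdt{}'s search over angular midpoints and \fhdt{}'s search over Klein-coordinate midpoints, although they place trial boundaries at genuinely different interior points within each inter-point gap, nonetheless range over \emph{exactly} the same set of label bipartitions. This is what makes the monotonicity of $\cot$ and the threshold-invariance of Lemma~\ref{lemma:threshold_invariance} jointly essential — the former aligns the two orderings while the latter guarantees that the precise trial location inside a gap is irrelevant to the score, and neither alone suffices.
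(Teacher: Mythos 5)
Your proposal is correct and follows essentially the same route as the paper: an induction on tree depth in which Lemma~\ref{lemma:hyperplane_equivalence} matches the candidate splits, Lemma~\ref{lemma:threshold_invariance} guarantees identical split selection, and Lemma~\ref{lemma:midpoint_isomorphism} matches the final boundary placement. Your write-up is in fact more careful than the paper's terse inductive step --- in particular the explicit candidate-correspondence argument via the monotonicity of $\cot$ is a detail the paper leaves implicit --- but the decomposition and the role of each lemma are identical.
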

    \begin{proof}
        We proceed by induction on the depth of a split.

        \textbf{Base case.} At depth 0, the manifold is unpartitioned: $\phi(\mathcal{D}_H) = \phi(\L{}) = \B{} = \mathcal{D}_F$.

        \textbf{Inductive hypothesis.} Assume that we have $\phi(\mathcal{D}_H) = \mathcal{D}_F$ for all $D, F$ of maximum depth $d$.

        \textbf{Inductive step.} 
        By the inductive hypothesis, any region $R \subseteq \L{}$ being split at depth $d+1$ is already equivalent via $\phi(\mathcal{D}_H) = \mathcal{D}_F$.
        By Lemma~\ref{lemma:hyperplane_equivalence}, partitions by equivalent hyperplanes induce equivalent splits, and 
        by Lemma~\ref{lemma:threshold_invariance}, both algorithms learn the same partition of the labels.
        Finally, by Lemma~\ref{lemma:midpoint_isomorphism} the midpoints of the partitioned labels also coincide.
    \end{proof}
    
    \begin{theorem}[Computational Complexity]
        \label{theorem:complexity}
        Let $n$ be the number of training points, $d$ the number of input features, and $h \leq \log n$ the height of each (balanced) tree.  
        With presorted feature lists, \fhdt{} trains in $O(n d \log n)$ time and performs inference in $O(h n)$ time, matching both Euclidean CART and \hyperdt.
        \end{theorem}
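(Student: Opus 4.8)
The plan is to decompose the cost of \fhdt\ into the three phases of Algorithm~\ref{alg:fast_hyperdt} — preprocessing, Euclidean training, and postprocessing — bound each separately, and show that the Euclidean training phase dominates and matches CART. Preprocessing applies $\phi_K$ from Equation~\ref{eq:phi}, which acts as $x_i \mapsto x_i/x_0$ on each of the $d$ spacelike coordinates of each of the $n$ points; this costs $O(nd)$ and is subdominant to everything that follows.

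For the Euclidean training phase, I would invoke the standard CART analysis applied to the projected data $\X_\B$. With the $d$ feature columns presorted, the best threshold split at a node holding $m$ samples is found by a single linear scan in sorted order while maintaining running label counts, costing $O(md)$; keeping child feature lists sorted after a split is an $O(m)$ stable partition. Since the samples present at any fixed depth partition a subset of the $n$ points, each depth contributes $O(nd)$, and summing over the $h \le \log n$ depths of a balanced tree yields $O(nd\log n)$. This is exactly the bound for Euclidean CART and for \hyperdt.

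The only phase requiring genuine care is postprocessing (\texttt{AdjustThresholds}), and this is where I expect the main obstacle. Descending recursively, each internal node restricts to its routed samples, partitions them into $\X^+,\X^-$ by one threshold comparison, extracts $L = \max_{\X^-} x_d$ and $R = \min_{\X^+} x_d$, and applies the $O(1)$ Einstein midpoint of Equation~\ref{eq:einstein_midpoint}; the node therefore costs $O(m)$ for $m$ routed samples. The crux is to establish the level-sum bound rigorously: because the recursion passes the already-partitioned subsets $\X^-,\X^+$ downward rather than re-scanning the full dataset at each node, the samples at any one depth partition a subset of the $n$ points, so each of the $O(\log n)$ levels contributes $O(n)$ and the whole traversal is $O(n\log n)$ — subdominant to $O(nd\log n)$. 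Hence training is $O(nd\log n)$ in total.

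For inference I would analyze Algorithm~\ref{alg:fast_hyperdt_inference_Onh}. A single test point is routed along one root-to-leaf path of length at most $h$; at each internal node on that path the procedure computes the single Klein coordinate $x_d/x_0$ and performs one comparison, i.e. $O(1)$ work per node. This gives $O(h)$ per point and $O(hn)$ across all $n$ points, matching both CART and \hyperdt\ and completing the proof.
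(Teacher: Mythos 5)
Your proposal is correct and follows essentially the same decomposition as the paper's proof: an $O(nd)$ preprocessing bound, the standard presorted-CART bound of $O(nd\log n)$ for the Euclidean training phase (which you re-derive where the paper simply cites it), a per-level $O(n)$ accounting of the postprocessing recursion giving a subdominant $O(nh)$ term, and an $O(1)$-per-node analysis of Algorithm~\ref{alg:fast_hyperdt_inference_Onh} yielding $O(hn)$ inference. The only cosmetic difference is that the paper additionally notes the factor of $t$ for ensembles, which the theorem statement does not require.
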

        
        \begin{proof}
        Let $t$ denote the number of trees in the ensemble (set $t=1$ for a single tree).
        We analyze one tree and multiply by $t$ at the end.
        
        \textbf{CART.}
        With presorted features, a standard Euclidean decision tree fits in
        $O(d n \log n)$ time and predicts in $O(h n)$ time \citep{sani_computational_2018}.
        The level‑order scan variant costs $O(d n h)$ to fit, but we quote the presorted bound for definiteness.
        Bounding $h \leq \log_2(n)$, a reasonable assumption for balanced trees, equates the two complexities. 
        
        \textbf{\hyperdt.}  
        \hyperdt{} differs from CART by a constant-time modification to the thresholding rule, so its core fitting cost is asymptotically that of CART.
        
        \textbf{Preprocessing.}
        Applying $\phi_K(\cdot)$ to every sample touches each coordinate once, giving $O(n d)$.
        
        \textbf{Postprocessing.}
        Each internal node (at most $2^{h}-1<2n$) needs the two sample values nearest its threshold.
        Because the active sample set halves at every depth, a single linear pass suffices, so this stage is $O(n)$ per depth, giving $O(nh)$ total.
        
        \textbf{Training.} Thus, the total training time for \fhdt\ is
        \begin{equation}
        O(\text{train})
          = O(d n) + O(d n \log n) + O(h n)
          = O(d n \log n)
        \end{equation}
        under the assumption that $h \leq \log_2(n)$.
        Multiplying by $t$ yields $O(t d n \log n)$ for an ensemble.
        
        \textbf{Inference.}
        Algorithm~\ref{alg:fast_hyperdt_inference_Onh} visits exactly $h$ nodes per example and evaluates the projection at a single dimension for each visit, so prediction is $O(h n)$ per tree, or $O(t h n)$ for
        the ensemble.
    \end{proof}

\section{Experimental Results}

    \begin{figure}[!b]
        \centering
        \includegraphics[width=.75\linewidth]{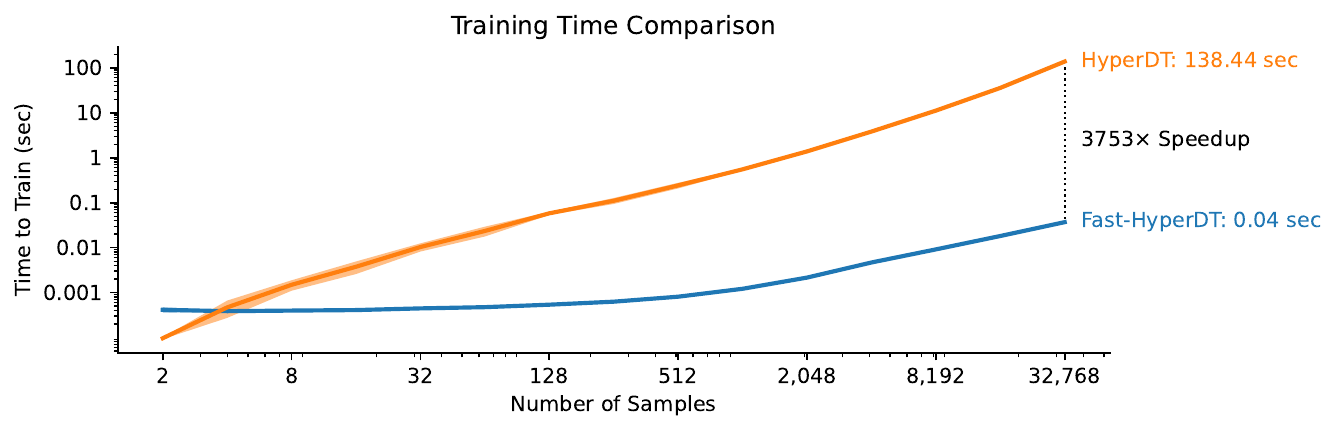}
        \caption{
            We compare the training time of \hyperdt\ and \fhdt\ across different training set sizes. 
            \fhdt\ is consistently faster than \hyperdt\ for 8 or more samples; for 32,768 samples, \fhdt\ trained on average 3,752$\times$ faster.
        }
        \label{fig:speed_comparison}
    \end{figure}
    
    \subsection{Experimental Setup}
    To match \sklearn's tiebreaking behavior more closely, we modified \hyperdt\ by changing the split criterion from $x < t$ to $x \leq t$ and reversing the order in which points are considered.
    We also set the random seed for \sklearn\ decision trees to a constant value.
    This is necessary because, even when \sklearn\ decision trees have subsampling disabled, they still randomly permute the features, which can affect the tiebreaking behavior of the algorithm. 

    Except where otherwise specified, we used trees with a maximum depth of 3 and no further restrictions on what splits are considered (e.g. minimum number of points in a leaf).
    We sample synthetic data from a mixture of wrapped Gaussians \citep{nagano_wrapped_2019}, a common way of benchmarking hyperbolic classifiers \citep{cho_large-margin_2018, chlenski_fast_2024}.
    For regression datasets, we follow \citet{chlenski_manify_2025} in applying cluster-specific slopes and intercepts to the initial vectors sampled from Gaussian distribution to generate regression targets.  

    \subsection{Agreement and Timing Benchmarks}
    
    To evaluate the time to train \fhdt, we trained \hyperdt\ and \fhdt\ decision trees on varying numbers of samples from a mixture of wrapped Gaussian distributions.
    Figure~\ref{fig:speed_comparison} compares the training speeds of \hyperdt\ and \fhdt\ on varying numbers of samples, revealing a 3,752$\times$ speedup when training decision trees on 32,768 samples.

    \begin{figure}[!t]
        \centering
        \includegraphics[width=\linewidth]{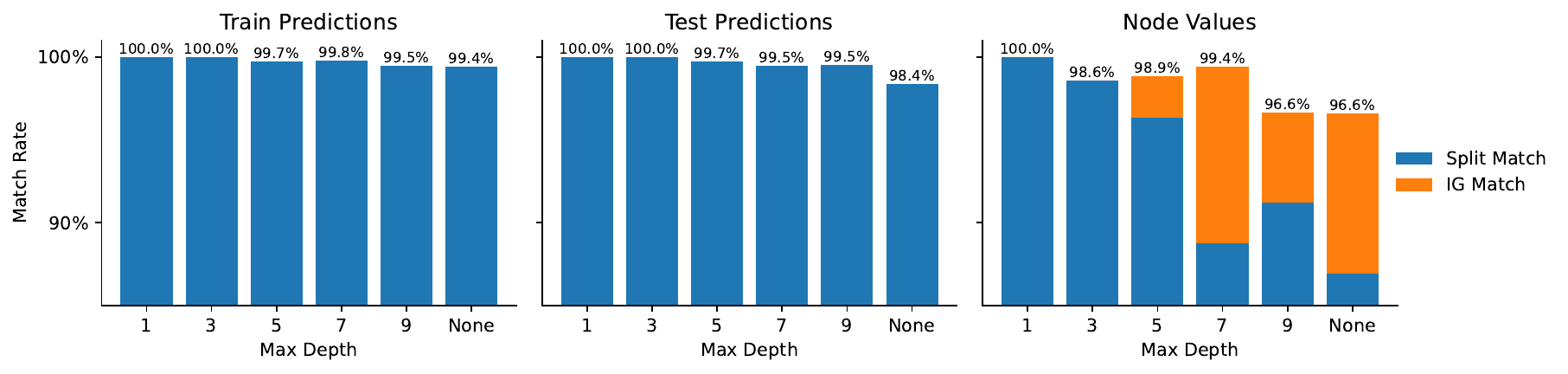}
        \caption{
            We compare the predictions and split values of \hyperdt\ and \fhdt.
            \textbf{Left:} The agreement between both models on the training data.
            \textbf{Middle:} The agreement between both models on the testing data.
            \textbf{Right:} Node-by-node split agreement, distinguishing between exact matches (blue) and matches that are equivalent in information gain (orange). 
        }
        \label{fig:matches}
    \end{figure}

    Given the theoretical results in Section~\ref{sec:theoretical_results}, it is natural to expect the predictions and splits of \hyperdt\ and \fhdt\ to match exactly; instead, Figure~\ref{fig:matches} reveals a high but imperfect degree of correspondence between the two.
    We speculated that differences in pragmatic factors, such as tiebreaking rules (which split to prefer when two splits have the same information gain) and numerical precision, might be behind these slight differences in training behavior. 
    
    To investigate, we tested 10,000 different seeds comparing \hyperdt\ and \fhdt\ on Gaussian mixtures of 1,000 points. 
    Of these, we found that 9,934 splits matched exactly, while 43 splits matched in information gain (up to a tolerance of $0.0001$).
    Figure~\ref{fig:information_gains} shows an example of information gains for 4 sets of features, revealing a tie between the best split along Features 1 and 2.
    Although we attempted to align the tiebreaking behavior of \sklearn\ and \hyperdt\ decision trees, we were never able to perfectly match the splits in all cases.
    
    For the remaining 22 splits, \fhdt\ always achieved a higher information gain than \hyperdt\ did, and angles tended to cluster near $\pi/4$ and $3\pi/4$.
    This seems to suggest that \hyperdt\ suffers from some numerical stability issues for extreme angles, which \fhdt, likely by virtue of omitting the inverse tangent operation used to compute angles in the original \hyperdt\ algorithm, manages to avoid.

    \begin{figure}[!b]
        \centering
        \includegraphics[width=\linewidth]{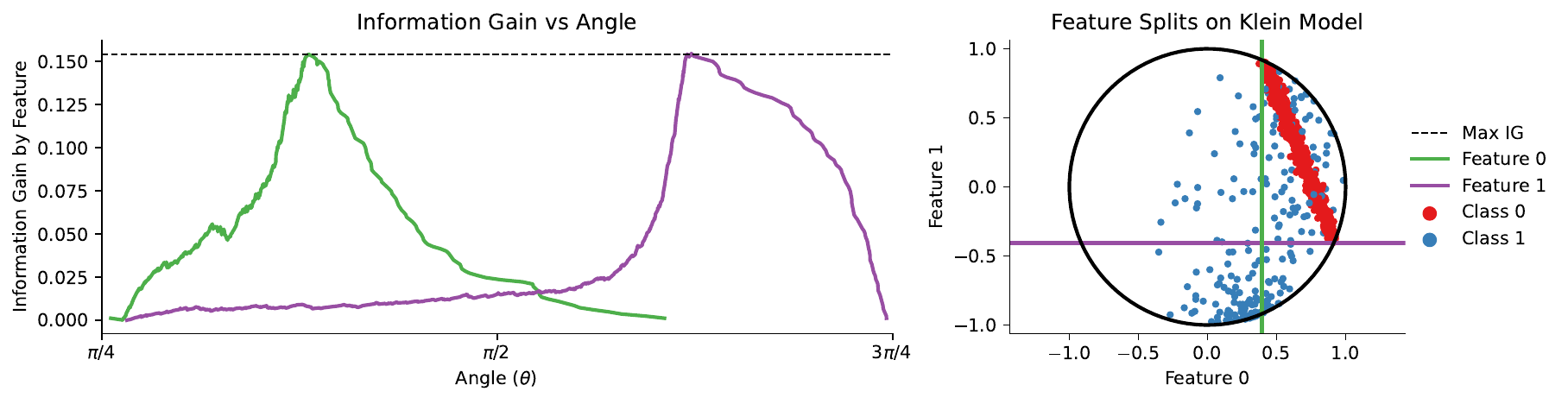}
        \caption{
            A Gaussian mixture dataset for which multiple splits attain the same information gain.
            \textbf{Left:} The information gains attained by each candidate split for each feature.
            \textbf{Right:} Each point in the Klein model, colored by class, and two splits on different features that attain the same information gain.
        }
        \label{fig:information_gains}
    \end{figure}

    \subsection{Other Models}
    In Tables~\ref{tab:benchmarks} and~\ref{tab:regression_benchmarks}, we evaluate \fhdt\ alongside a variety of other models on classification and regression (respectively) of Gaussian mixtures. 
    We evaluate two baselines using \sklearn\ on coordinates in the Lorentz model and the Beltrami-Klein model.
    We evaluate the base model of \fhdt\ against oblique decision trees (CO2 and HHC), as well as XGBoost and LightGBM.

    Oblique decision trees vastly underperformed even the baselines, but are able to learn better-than-random decision rules; for regression, CO2 trees are even competitive with the baseline models.
    In all cases, XGBoost models performed best, highlighting the value of extending this technique to hyperbolic space.

    Finally, we find that base \hyperdt\ performance is only slightly better than \sklearn\ in Beltrami-Klein coordinates.
    Because the latter can be thought of as \fhdt\ without the final postprocessing step, this effectively functions as an ablation of postprocessing.
    This result recapitulates earlier findings in \citep{chlenski_fast_2024} suggesting that ablating the hyperbolic midpoints does not significantly impact \hyperdt\ performance.
    We note that the two ablations actually coincide, as the angular bisector and the average of the Klein coordinates are equal.\footnote{This can be shown by converting angles $\theta_u = \cot^{-1}(u_d/u_0)$ and $\theta_v = \cot^{-1}(v_d/v_0)$ to their bisector $\theta_m = (\theta_u + \theta_v)/2$, then mapping back to Klein coordinates as $\cot(\theta_m) = (u_{d}/x_{0} + x_{d}/x_{0})/2$.}

    \begin{table}[!t]
        \centering
        \caption{
            Classification accuracies across 100 synthetic 8-class benchmarks using Gaussian mixture datasets of varying dimensionalities. 
        }
        \begin{small}
        \begin{tabular}{rcccccccccc}
 & \multicolumn{2}{c}{\sklearn\ DT} & \multicolumn{3}{c}{\fhdt{}} & \multicolumn{2}{c}{\sklearn\ RF} & \multicolumn{3}{c}{Fast-\hyperrf} \\
 \cmidrule(l){2-3} \cmidrule(l){4-6} \cmidrule(l){7-8} \cmidrule(l){9-11}
$d$ & Lorentz & Klein & HyperDT & CO2 & HHC & Lorentz & Klein & HyperRF & LightGBM & XGBoost \\
\midrule
2 & {\cellcolor[HTML]{2F984F}} \color[HTML]{F1F1F1} 48.98 & {\cellcolor[HTML]{238B45}} \color[HTML]{F1F1F1} 50.75 & {\cellcolor[HTML]{238B45}} \color[HTML]{F1F1F1} 50.75 & {\cellcolor[HTML]{F7FCF5}} \color[HTML]{000000} 18.07 & {\cellcolor[HTML]{F7FCF5}} \color[HTML]{000000} 11.44 & {\cellcolor[HTML]{2B934B}} \color[HTML]{F1F1F1} 49.63 & {\cellcolor[HTML]{05712F}} \color[HTML]{F1F1F1} 54.59 & {\cellcolor[HTML]{067230}} \color[HTML]{F1F1F1} 54.34 & {\cellcolor[HTML]{005221}} \color[HTML]{F1F1F1} 57.94 & {\cellcolor[HTML]{00441B}} \color[HTML]{F1F1F1} 59.44 \\
4 & {\cellcolor[HTML]{7AC77B}} \color[HTML]{000000} 41.76 & {\cellcolor[HTML]{6ABF71}} \color[HTML]{000000} 43.24 & {\cellcolor[HTML]{6ABF71}} \color[HTML]{000000} 43.24 & {\cellcolor[HTML]{F7FCF5}} \color[HTML]{000000} 16.68 & {\cellcolor[HTML]{F7FCF5}} \color[HTML]{000000} 13.63 & {\cellcolor[HTML]{45AD5F}} \color[HTML]{F1F1F1} 46.35 & {\cellcolor[HTML]{2B934B}} \color[HTML]{F1F1F1} 49.77 & {\cellcolor[HTML]{2A924A}} \color[HTML]{F1F1F1} 49.99 & {\cellcolor[HTML]{005F26}} \color[HTML]{F1F1F1} 56.66 & {\cellcolor[HTML]{00441B}} \color[HTML]{F1F1F1} 59.60 \\
8 & {\cellcolor[HTML]{A7DBA0}} \color[HTML]{000000} 36.76 & {\cellcolor[HTML]{A0D99B}} \color[HTML]{000000} 37.41 & {\cellcolor[HTML]{A0D99B}} \color[HTML]{000000} 37.41 & {\cellcolor[HTML]{F7FCF5}} \color[HTML]{000000} 15.59 & {\cellcolor[HTML]{F7FCF5}} \color[HTML]{000000} 11.88 & {\cellcolor[HTML]{65BD6F}} \color[HTML]{F1F1F1} 42.76 & {\cellcolor[HTML]{46AE60}} \color[HTML]{F1F1F1} 45.21 & {\cellcolor[HTML]{46AE60}} \color[HTML]{F1F1F1} 45.23 & {\cellcolor[HTML]{006B2B}} \color[HTML]{F1F1F1} 54.10 & {\cellcolor[HTML]{00441B}} \color[HTML]{F1F1F1} 58.06 \\
16 & {\cellcolor[HTML]{C9EAC2}} \color[HTML]{000000} 32.15 & {\cellcolor[HTML]{C8E9C1}} \color[HTML]{000000} 32.29 & {\cellcolor[HTML]{C8E9C1}} \color[HTML]{000000} 32.27 & {\cellcolor[HTML]{F7FCF5}} \color[HTML]{000000} 13.85 & {\cellcolor[HTML]{F7FCF5}} \color[HTML]{000000} 13.46 & {\cellcolor[HTML]{94D390}} \color[HTML]{000000} 37.08 & {\cellcolor[HTML]{72C375}} \color[HTML]{000000} 39.82 & {\cellcolor[HTML]{70C274}} \color[HTML]{000000} 39.83 & {\cellcolor[HTML]{077331}} \color[HTML]{F1F1F1} 49.86 & {\cellcolor[HTML]{00441B}} \color[HTML]{F1F1F1} 54.20 \\
32 & {\cellcolor[HTML]{DEF2D9}} \color[HTML]{000000} 28.66 & {\cellcolor[HTML]{DBF1D6}} \color[HTML]{000000} 28.91 & {\cellcolor[HTML]{DBF1D6}} \color[HTML]{000000} 28.90 & {\cellcolor[HTML]{F7FCF5}} \color[HTML]{000000} 12.38 & {\cellcolor[HTML]{F7FCF5}} \color[HTML]{000000} 12.58 & {\cellcolor[HTML]{AEDEA7}} \color[HTML]{000000} 32.92 & {\cellcolor[HTML]{90D18D}} \color[HTML]{000000} 35.06 & {\cellcolor[HTML]{8BCF89}} \color[HTML]{000000} 35.29 & {\cellcolor[HTML]{117B38}} \color[HTML]{F1F1F1} 44.26 & {\cellcolor[HTML]{00441B}} \color[HTML]{F1F1F1} 48.67 \\
64 & {\cellcolor[HTML]{F1FAEE}} \color[HTML]{000000} 25.73 & {\cellcolor[HTML]{F0F9EC}} \color[HTML]{000000} 25.86 & {\cellcolor[HTML]{EFF9EC}} \color[HTML]{000000} 25.88 & {\cellcolor[HTML]{F7FCF5}} \color[HTML]{000000} 13.10 & {\cellcolor[HTML]{F7FCF5}} \color[HTML]{000000} 12.22 & {\cellcolor[HTML]{C9EAC2}} \color[HTML]{000000} 28.89 & {\cellcolor[HTML]{ABDDA5}} \color[HTML]{000000} 30.45 & {\cellcolor[HTML]{ACDEA6}} \color[HTML]{000000} 30.41 & {\cellcolor[HTML]{0D7836}} \color[HTML]{F1F1F1} 38.23 & {\cellcolor[HTML]{00441B}} \color[HTML]{F1F1F1} 41.02 \\
128 & {\cellcolor[HTML]{F7FCF5}} \color[HTML]{000000} 24.66 & {\cellcolor[HTML]{EFF9EC}} \color[HTML]{000000} 25.56 & {\cellcolor[HTML]{F0F9EC}} \color[HTML]{000000} 25.55 & {\cellcolor[HTML]{F7FCF5}} \color[HTML]{000000} 13.69 & {\cellcolor[HTML]{F7FCF5}} \color[HTML]{000000} 12.87 & {\cellcolor[HTML]{DAF0D4}} \color[HTML]{000000} 26.76 & {\cellcolor[HTML]{B6E2AF}} \color[HTML]{000000} 28.11 & {\cellcolor[HTML]{B8E3B2}} \color[HTML]{000000} 28.07 & {\cellcolor[HTML]{0D7836}} \color[HTML]{F1F1F1} 33.44 & {\cellcolor[HTML]{00441B}} \color[HTML]{F1F1F1} 35.20 \\
\end{tabular}

        \end{small}
        \label{tab:benchmarks}
    \end{table}

    \begin{table}[!t]
        \centering
        \caption{
            Regression mean squared error (MSE) scores across 100 synthetic regression datasets.
            Aside from using regression variants of all models, the benchmarking setup is otherwise identical to Table~\ref{tab:benchmarks}.
        }
        \begin{small}
        \begin{tabular}{rcccccccccc}
 & \multicolumn{2}{c}{\sklearn\ DT} & \multicolumn{3}{c}{\fhdt{}} & \multicolumn{2}{c}{\sklearn\ RF} & \multicolumn{3}{c}{Fast-\hyperrf} \\
 \cmidrule(l){2-3} \cmidrule(l){4-6} \cmidrule(l){7-8} \cmidrule(l){9-11}
$d$ & Lorentz & Klein & HyperDT & CO2 & HHC & Lorentz & Klein & HyperRF & LightGBM & XGBoost \\
\midrule
2 & {\cellcolor[HTML]{F3FAF0}} \color[HTML]{000000} .0297 & {\cellcolor[HTML]{B4E1AD}} \color[HTML]{000000} .0273 & {\cellcolor[HTML]{B5E1AE}} \color[HTML]{000000} .0273 & {\cellcolor[HTML]{ECF8E8}} \color[HTML]{000000} .0293 & {\cellcolor[HTML]{F7FCF5}} \color[HTML]{000000} .1294 & {\cellcolor[HTML]{A0D99B}} \color[HTML]{000000} .0267 & {\cellcolor[HTML]{60BA6C}} \color[HTML]{F1F1F1} .0253 & {\cellcolor[HTML]{5BB86A}} \color[HTML]{F1F1F1} .0252 & {\cellcolor[HTML]{CAEAC3}} \color[HTML]{000000} .0279 & {\cellcolor[HTML]{00441B}} \color[HTML]{F1F1F1} .0214 \\
4 & {\cellcolor[HTML]{F6FCF4}} \color[HTML]{000000} .0299 & {\cellcolor[HTML]{D7EFD1}} \color[HTML]{000000} .0277 & {\cellcolor[HTML]{D8F0D2}} \color[HTML]{000000} .0277 & {\cellcolor[HTML]{F0F9EC}} \color[HTML]{000000} .0293 & {\cellcolor[HTML]{F7FCF5}} \color[HTML]{000000} .1461 & {\cellcolor[HTML]{AFDFA8}} \color[HTML]{000000} .0258 & {\cellcolor[HTML]{72C375}} \color[HTML]{000000} .0236 & {\cellcolor[HTML]{78C679}} \color[HTML]{000000} .0238 & {\cellcolor[HTML]{A3DA9D}} \color[HTML]{000000} .0253 & {\cellcolor[HTML]{00441B}} \color[HTML]{F1F1F1} .0174 \\
8 & {\cellcolor[HTML]{ABDDA5}} \color[HTML]{000000} .0245 & {\cellcolor[HTML]{A5DB9F}} \color[HTML]{000000} .0242 & {\cellcolor[HTML]{A4DA9E}} \color[HTML]{000000} .0242 & {\cellcolor[HTML]{AEDEA7}} \color[HTML]{000000} .0247 & {\cellcolor[HTML]{F7FCF5}} \color[HTML]{000000} .1643 & {\cellcolor[HTML]{53B466}} \color[HTML]{F1F1F1} .0207 & {\cellcolor[HTML]{40AA5D}} \color[HTML]{F1F1F1} .0199 & {\cellcolor[HTML]{40AA5D}} \color[HTML]{F1F1F1} .0200 & {\cellcolor[HTML]{46AE60}} \color[HTML]{F1F1F1} .0202 & {\cellcolor[HTML]{00441B}} \color[HTML]{F1F1F1} .0140 \\
16 & {\cellcolor[HTML]{C1E6BA}} \color[HTML]{000000} .0258 & {\cellcolor[HTML]{B2E0AC}} \color[HTML]{000000} .0251 & {\cellcolor[HTML]{B2E0AC}} \color[HTML]{000000} .0251 & {\cellcolor[HTML]{B2E0AC}} \color[HTML]{000000} .0251 & {\cellcolor[HTML]{F7FCF5}} \color[HTML]{000000} .1821 & {\cellcolor[HTML]{5EB96B}} \color[HTML]{F1F1F1} .0215 & {\cellcolor[HTML]{4EB264}} \color[HTML]{F1F1F1} .0209 & {\cellcolor[HTML]{46AE60}} \color[HTML]{F1F1F1} .0206 & {\cellcolor[HTML]{52B365}} \color[HTML]{F1F1F1} .0210 & {\cellcolor[HTML]{00441B}} \color[HTML]{F1F1F1} .0146 \\
32 & {\cellcolor[HTML]{9ED798}} \color[HTML]{000000} .0241 & {\cellcolor[HTML]{91D28E}} \color[HTML]{000000} .0236 & {\cellcolor[HTML]{91D28E}} \color[HTML]{000000} .0236 & {\cellcolor[HTML]{91D28E}} \color[HTML]{000000} .0235 & {\cellcolor[HTML]{F7FCF5}} \color[HTML]{000000} .2035 & {\cellcolor[HTML]{42AB5D}} \color[HTML]{F1F1F1} .0204 & {\cellcolor[HTML]{38A156}} \color[HTML]{F1F1F1} .0198 & {\cellcolor[HTML]{37A055}} \color[HTML]{F1F1F1} .0198 & {\cellcolor[HTML]{38A156}} \color[HTML]{F1F1F1} .0198 & {\cellcolor[HTML]{00441B}} \color[HTML]{F1F1F1} .0146 \\
64 & {\cellcolor[HTML]{99D595}} \color[HTML]{000000} .0246 & {\cellcolor[HTML]{99D595}} \color[HTML]{000000} .0246 & {\cellcolor[HTML]{99D595}} \color[HTML]{000000} .0246 & {\cellcolor[HTML]{7FC97F}} \color[HTML]{000000} .0236 & {\cellcolor[HTML]{F7FCF5}} \color[HTML]{000000} .2159 & {\cellcolor[HTML]{38A156}} \color[HTML]{F1F1F1} .0210 & {\cellcolor[HTML]{319A50}} \color[HTML]{F1F1F1} .0207 & {\cellcolor[HTML]{329B51}} \color[HTML]{F1F1F1} .0207 & {\cellcolor[HTML]{319A50}} \color[HTML]{F1F1F1} .0206 & {\cellcolor[HTML]{00441B}} \color[HTML]{F1F1F1} .0164 \\
128 & {\cellcolor[HTML]{7AC77B}} \color[HTML]{000000} .0239 & {\cellcolor[HTML]{73C476}} \color[HTML]{000000} .0237 & {\cellcolor[HTML]{72C375}} \color[HTML]{000000} .0237 & {\cellcolor[HTML]{58B668}} \color[HTML]{F1F1F1} .0229 & {\cellcolor[HTML]{F7FCF5}} \color[HTML]{000000} .2261 & {\cellcolor[HTML]{238B45}} \color[HTML]{F1F1F1} .0206 & {\cellcolor[HTML]{18823D}} \color[HTML]{F1F1F1} .0201 & {\cellcolor[HTML]{1A843F}} \color[HTML]{F1F1F1} .0202 & {\cellcolor[HTML]{18823D}} \color[HTML]{F1F1F1} .0201 & {\cellcolor[HTML]{00441B}} \color[HTML]{F1F1F1} .0175 \\
\end{tabular}
        \end{small}
        \label{tab:regression_benchmarks}
    \end{table}

\section{Conclusions}
In this work, we proposed \fhdt, which rewrites \hyperdt\ as a wrapper around Euclidean tree-based models using the Beltrami-Klein model of hyperbolic space.
We prove our method is equivalent to \hyperdt\ while being simpler, faster, and more flexible.
We also demonstrate the superior speed and extensibility of our method empirically, in particular noting that the XGBoost variant of \fhdt\ is vastly more accurate than base \hyperdt.

Future work can focus on extending \fhdt\ to other methods, such as Isolation Forests \citep{liu_isolation_2008} and rotation forests \citep{bagnall_is_2020}, which would address the absence of privileged basis dimensions \citep{elhage_privileged_2023} in hyperbolic embedding methods; extending \fhdt\ to hyperspherical data as in \citet{chlenski_mixed-curvature_2025}; and
extending the connection between \hyperdt, and decision trees to neural networks, as in \citet{aytekin_neural_2022} or via the polytope lens \citep{black_interpreting_2022}.




\bibliography{main}
\bibliographystyle{tmlr}

\appendix

\end{document}